\theoremstyle{definition}
\newtheorem{lemma}{Lemma}[section]
\newtheorem{definition}{Definition}[section]
\newcommand{\G}[1]{\mathbf{\mathcal{#1}}}
\title{Algorithms for Item Categorization Based on Ordinal Ranking Data}
\author{Josh Girson, Shuchin Aeron \\ Tufts University, Medford, MA}
\date{}							
\begin{document}
\maketitle

\begin{abstract}
We present a new method for identifying the latent categorization of items based on their rankings. Complimenting a recent work that uses a Dirichlet prior on preference vectors and variational inference, we show that this problem can be effectively dealt with using existing community detection algorithms, with the communities corresponding to item categories. In particular we convert the bipartite ranking data to a unipartite graph of item affinities, and apply community detection algorithms. In this context we modify an existing algorithm - namely the label propagation algorithm to a variant that uses the distance between the nodes for weighting the label propagation - to identify the categories.
We propose and analyze a synthetic ordinal ranking model and show its relation to the recently much studied stochastic block model. We test our algorithms on synthetic data and compare performance with several popular community detection algorithms. We also test the method on real data sets of movie categorization from the Movie Lens database. In all of the cases our algorithm is able to identify the categories for a suitable choice of tuning parameter.


\end{abstract}

\section{Introduction}
In this paper we consider the problem of item categorization based on choice, preference or ranking data by a number of voters. So far using the voter-rating matrix, the literature has focused on voter categorization instead of item categorization \cite{Beigi_2014}. We are partly motivated by a recent work, \cite{Agarwal_IJCAI2016} where item categorization based on choice statistics was considered. Using a Dirichlet prior on the preferences for each user coupled with a random utility model for making choices, the authors use a variational algorithm to infer the categories. In contrast to these approaches, in this paper a new ranking (choice) model among the categories is presented followed by the use of community detection algorithms \cite{Fortunato,GNM} for category discovery by converting the bipartite graph of ratings to the unipartite similarity graph of item similarities.

In this context our contribution is two-fold - (a) We analyze the expected connectivity in the similarity graph and link our model to the recently much studied stochastic block model. This implies that one can understand information theoretic limits on the discovery of categories by directly using recent results in \cite{IT_SBM_Abbe}\footnote{We note that this conversion can also be applied to other latent category choice/ranking models and similarity measures, but the ensuing analysis, at present, seems quite complicated.}; (b) We propose a variation of the standard label propagation algorithm \cite{2007_Label_Prop}, which we refer to as the weighted label propagation algorithm where the path distance in label aggregation is incorporated to avoid false inclusions.  This can be thought of as nodes having more influence on their neighbors than distant nodes in the graph, which slows the ability of one label to consume an entire graph. We show that this simple modification is useful in avoiding misclustering in some cases where the standard label propagation algorithm fails.  We also compare the performance of this algorithm with other algorithms such as Clauset-Neuman-Moore (CNM) \cite{CNM} algorithm that maximizes the modularity metric for clustering. Synthetic data experiments show that the weighted label propagation algorithm is competitive with these existing methods.

The rest of the paper is organized as follows. In the following section we introduce the generative ranking model used in the analysis of the algorithm and the relation to the standard block model is explored through the analysis of the chosen similarity function. In section 3, we present a modification of the label propagation algorithm that introduces a weighting of the labels that are to be selected at each round. The new algorithm aims to give larger weight to labels that are closer to their source vertices and less strength to labels that are farther. Finally, in section 4 we present the experimental results of the algorithm, both on the synthetic data produced by the generative model and on real data from the Movie Lens database \cite{MovieLens}. 

\section{Generation of a Synthetic Rating Model}

Algorithm \ref{alg:SynthRankGen} outlines the model used to generate the ranking data among categories. The parameters for this data generation are $C$: the number of categories; $S$: the number of items in each category; $p$: the expected number of items each category will swap with every other category (referred to as the mixing parameter) to allow for choice variability; and $V$: the number of voters. 

The main idea behind the synthetic ranking generation algorithm (Algorithm \ref{alg:SynthRankGen}) is as follows. For each voter, the following process is repeated. Each of the categories are ordered randomly from $1$ to $C$. Then, each of the $S$ items of the $C$ categories are given a unique random integer value in the range $[C_i S, \: (C_i + 1) S)$ where $C_i$ is the location of the category in the ordering. This process produces an ordinal ranking system of the $N = S C$ items with items in the same category initially given close proximity to one another. 

\begin{algorithm}[h]
\caption{Synthetic Ranking Generation}
\begin{algorithmic}
\label{alg:SynthRankGen}
\STATE \textbf{Input:} $\mathbf{S}$: Category size; $\mathbf{C}$: Number of Categories; $\mathbf{p}$: Mixing parameter; $\mathbf{V}$: Number of voters
\FORALL{voters in range(V)}
        \STATE $\mathbf{\mathcal{O}} \leftarrow$ random permutation of the $C$ categories
        \STATE $i \leftarrow 1$
	\FORALL{$ j \in \mathbf{\mathcal{O}}$}
		\STATE $R_i \leftarrow$ random permutation of $[jS, (j+1)S - 1]$
		\STATE $i \leftarrow i + 1$
	\ENDFOR
	\STATE $\mathbf{\mathcal{O}_2} \leftarrow$ random permutation of the $C$ categories
	\FOR{category $c_1 \in \mathbf{\mathcal{O}_2}$}
		\FORALL{other categories $c_2 \in \mathbf{\mathcal{O}_2} - \{c1\}$}
			\STATE Swap $p$ random unique items between $c_1$ and $c_2$
		\ENDFOR
	\ENDFOR
	\STATE $R \leftarrow$ collapse $[R_1, \dots, R_C]$
\ENDFOR
\end{algorithmic}
\end{algorithm}

Once all of the items are placed as such, \emph{the mixing process begins}. In a new random order, generated separately from the ordering above, the $C$ categories each swap $p$ items with each of the other categories. These $p$ items are chosen randomly and uniformly from the $S$ items of the category. It is worth noting that if 3 categories swap in the order $A$, $B$ $C$, then the item that swapped into $B$ from $A$ could be the same ones that $B$ swaps with $C$. This is to say that the $p$ items swapped from a category do not have to have originated in that category.

\begin{figure}[b] 
    \centering
    \includegraphics[width=0.25\textwidth]{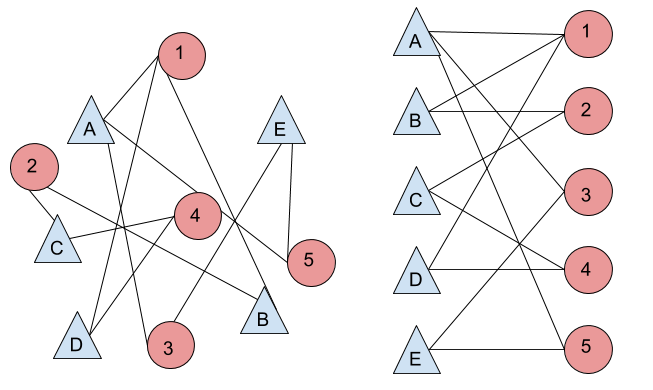}
    \caption{Two views of the same bipartite graph. On the right side, the two groups are clearly split, while on the left it is more difficult to see the separation}
    \label{fig:bipartite_graph}
\end{figure}

As mentioned, the above ranking generation is repeated for each of the $V$ voters to create a complete dataset of \emph{ordinal} rankings. Once this is completed, the bipartite ranking data is converted into a unipartite graph using Algorithm \ref{alg:BipartiteToUnipartite}. This algorithm is based around the idea of collapsing the edges between voters and items into direct edges between items.  For each voter, call it $v$, each pair of item is compared using a similarity function, $S$. The values of this similarity function affect whether or not there is to be an edge between these two items.

As edges in the final graph are to represent a strong relationship between elements, the similarity function should have a higher value for items that are rated similarly and a lower rating for those with a larger difference. The reasoning behind this is that users typically prefer similar things and will thus give similar ratings to items of the same category while items of different categories should be rated differently due to preference. There are a variety of functions that can be chosen such as the Cosine Similarity Measure \cite{Beigi_2014} or the Pearson Correlation Coefficient \cite{2013arXiv1311.1924M}. In this paper, a different function, introduced below, is used that is more amenable to analysis as shown in the next section.

\begin{algorithm}[h]
\caption{Conversion of Bipartite Ranking Graph to Unipartite}
\begin{algorithmic}
\label{alg:BipartiteToUnipartite}
\STATE \textbf{Input:} An $M \times N$ matrix $\mathbf{B}$ where the columns are the elements and the rows are the ratings of the voters. A similarity function $S$ that maps pairs of rankings to a weight. A threshold, $\epsilon$, to cut off the weights.
\STATE \textbf{Initialize:} A weight matrix, $\mathbf{W}$, of all zeros of size $N \times N$. An empty graph $\mathcal{G}$ 
\FOR{Voter v in range(M)}
\FOR{i in range(N)}
\FOR{j in range(i + 1, N)}
\STATE $\mathbf{W}[i][j] += S(\mathbf{B}[v][i], \mathbf{B}[v][j])$
\ENDFOR
\ENDFOR
\ENDFOR

\FOR{i in range(N)}
\FOR{j in range(N)}
\IF{$\mathbf{W}[i][j] > \epsilon$}
\STATE Add edge $e_{ij}$ to $\mathcal{G}$
\ENDIF 
\ENDFOR
\ENDFOR
\end{algorithmic}
\end{algorithm}

\subsection{Relation to the Stochastic Block Model (SBM)}

In this section, we aim to discover the relationship between this model and the stochastic block model (SBM)\footnote{Also equivalent to the planted partition model.} recently used for deriving information theoretic bounds \cite{IT_SBM_Abbe}. In the standard block model, the input parameters are the community size, number of communities, intracommunity density ($p_{in}$ or $\alpha$) and the intercommunity density ($p_{out}$ or $\beta$). The output is a graph of size community size $\times$ number of communities where the chance of an edge between nodes of the same community is $\alpha$ and the chance of an edge between nodes of different communities is $\beta$. It is clear that in this ranking model, $S$ and $C$ are directly representative of the community size and number of communities parameters in the standard block model. It is necessary to examine Algorithm \ref{alg:BipartiteToUnipartite}, which actually introduces the edges, to determine how $\alpha$ and $\beta$ relate to the standard block model. Algorithm \ref{alg:BipartiteToUnipartite} includes a threshold, $\epsilon$, that serves as a cutoff for when an edge will and will not connect two elements, $a$ and $b$. $\alpha$ and $\beta$ are discovered by calculating the probability that the average similarity value between the two elements across all voters is above this threshold.
\\
\begin{definition} \emph{\textbf{Alpha:}} The probability of an edge between elements of the same category in the generative model
\begin{align}
\alpha_{(a,b)} = P \left\{ \frac{\sum\limits_{v \in V} S_v (a,b,N)}{|V|} \geq \epsilon \right\}
\end{align}
\end{definition}

\begin{definition} \emph{\textbf{Beta:}} The probability of an edge between elements of different categories in the generative model
\begin{align}
\beta_{(a,b)} = P \left\{ \frac{\sum\limits_{v \in V} S_v (a,b,N)}{|V|} \geq \epsilon \right\}
\end{align}
\end{definition}

Since these probabilities are implicit, in the following sections we will analyze their expected behaviors. 

\subsection{Analysis of Similarity Function}

As the calculation of $\alpha$ and $\beta$ is an implicit calculation, it is useful to calculate the expected values of the similarity function for elements in the same category and those in different categories as a proxy for examining these measures. We first introduce the similarity function we are using in the algorithm and analysis: \[Sim(a, b, N) = 1 - \frac{|a - b|}{N}\]. As discussed above, this function is chosen as it gives higher weights to elements that are more closely ranked, such as those in the same category, than elements that are ranked very differently. As the threshold, $\epsilon$, should lead to the addition of only edges that represent strong relationships, the expected value of our similarity function is used. In this calculation it is hypothesized that there is an equal chance of choosing any pair of elements $a$ and $b$. 
\\
\begin{lemma} \emph{Expected Value of Similarity Function:}
\begin{align}
\mathbb{E}[|a-b|] = \frac{N+1}{3} \Longrightarrow \mathbb{E}[S(a,b,N)] & = 1 - \frac{N+1}{3N}
\end{align}
\end{lemma}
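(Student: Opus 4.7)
The plan is to compute $\mathbb{E}[|a-b|]$ directly from the uniform distribution on pairs, and then pass through the definition of $\mathrm{Sim}$ by linearity of expectation. By the hypothesis stated just before the lemma, $(a,b)$ is a uniformly random pair of distinct ranks drawn from $\{1,2,\ldots,N\}$, so
\[
\mathbb{E}[|a-b|] \;=\; \binom{N}{2}^{-1} \sum_{1 \le i < j \le N} (j-i).
\]

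The key step is to evaluate the double sum by reindexing according to the gap $d := j - i$. For each $d \in \{1, \ldots, N-1\}$ there are exactly $N - d$ pairs $(i,j)$ with $j - i = d$, so
\[
\sum_{1 \le i < j \le N}(j-i) \;=\; \sum_{d=1}^{N-1} d\,(N-d) \;=\; N \sum_{d=1}^{N-1} d \;-\; \sum_{d=1}^{N-1} d^{2}.
\]
Substituting the standard closed forms $\sum_{d=1}^{N-1} d = \tfrac{N(N-1)}{2}$ and $\sum_{d=1}^{N-1} d^{2} = \tfrac{(N-1)N(2N-1)}{6}$ and simplifying the common factor $\tfrac{N(N-1)}{6}$ yields $\tfrac{N(N-1)(N+1)}{6}$. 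Dividing by $\binom{N}{2} = \tfrac{N(N-1)}{2}$ gives $\mathbb{E}[|a-b|] = \tfrac{N+1}{3}$, as claimed.

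For the second equality, I would just apply linearity of expectation to the definition $\mathrm{Sim}(a,b,N) = 1 - |a-b|/N$, obtaining $\mathbb{E}[\mathrm{Sim}(a,b,N)] = 1 - \tfrac{1}{N}\mathbb{E}[|a-b|] = 1 - \tfrac{N+1}{3N}$.

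There is no real obstacle here: the argument is a routine combinatorial identity plus linearity of expectation. The only point that deserves a sentence of care is the sampling convention, since the answer $\tfrac{N+1}{3}$ only arises if $a$ and $b$ are drawn as a random pair of \emph{distinct} ranks (or equivalently as two items of a uniformly random permutation); if one instead took $a, b$ i.i.d.\ uniform on $\{1,\ldots,N\}$, the answer would be $\tfrac{N^2-1}{3N}$, which merely differs by a $1/N$ correction and does not affect the large-$N$ behavior that matters for the SBM comparison in the previous subsection.
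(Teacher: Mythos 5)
Your proposal is correct and follows essentially the same route as the paper's Appendix A: both reduce $\mathbb{E}[|a-b|]$ to the gap-counted sum $\sum_{d=1}^{N-1} d(N-d)$ over $\binom{N}{2}$ equally likely distinct pairs and then apply linearity of expectation to $\mathrm{Sim}(a,b,N)=1-|a-b|/N$. Your version merely spells out the reindexing and the closed forms for $\sum d$ and $\sum d^2$ that the paper leaves implicit, and your remark about the distinct-pair sampling convention is a reasonable clarification of the paper's hypothesis that any pair $a,b$ is equally likely.
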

\begin{proof}
Please see Appendix A
\end{proof}

\subsubsection{Analysis of Similarity Function for Elements of the Same Category}

As mentioned above, as the calculation of $\alpha$ is implicit, it is useful to consider what the expected value for the similarity function would be between elements in the same category. All that must be considered is the value of the distance, as the value of the similarity function is inversely proportional to the distance. The expected distance is calculated by averaging the distance between all possible combinations of elements. 

Situations involving mixing must also be considered. The total distance of all possible pairs of elements can be calculated regardless of where a new element, $X$, swaps in; $X$ can swap in with any of the other elements already in the category. In order to calculate the number of combinations, the $S$ possible locations swapped in the category are multiplied by the $\binom{S}{2}$ different element pairs to yield the total $S * \binom{S}{2}$ possible distance combinations. 

To get the total distance, we recognize that the new element, $X$, will be paired with each other element already in the category $S - 1$ times, every round except that in which $X$ will have swapped with the element itself. Similarly, all of the standard intracategory pairings will still happen $S - 2$ times: the two exceptions being when either of the two elements is the one swapped. Thus we multiply the standard sum of intracategory distances, $\sum\limits_{i = 1}^{S - 1} i (S - i)$ by $S - 2$. These values are combined to get the expected distance expressed below.
\\
\begin{lemma} \emph{Expected Intracategory Distance with 1 swap:}
\begin{align}
\mathbb{E}\left[|a - b|_{same}\right]  = \frac{(S - 1) \sum\limits_{i = 1}^{S} |X - i| + (S - 2) \sum\limits_{i = 1}^{S - 1} i (S - i)}{S * \binom{S}{2}}
\end{align}
\end{lemma}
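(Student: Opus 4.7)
The statement is essentially a weighted average, so the plan is to carefully enumerate contributions to the numerator and show the denominator counts the right thing. Fix a category that, absent mixing, occupies the $S$ consecutive rank positions $\{1,2,\ldots,S\}$ (the distances involved are translation invariant, so this is without loss of generality). Let $X$ denote the foreign rank value that is swapped in. The expectation is taken uniformly over the $S$ possible choices of which original element is displaced by $X$ and over the $\binom{S}{2}$ unordered pairs of elements currently sitting in the category; this product gives the $S\cdot\binom{S}{2}$ normalization in the denominator.

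The first step is to split the numerator into two groups of pair-contributions: (i) pairs that involve $X$ and (ii) pairs of two ``original'' elements. For group (i), $X$ is always present in the category, and for each original element $i\in\{1,\ldots,S\}$ there is exactly one swap-scenario (namely, $X$ replacing $i$) in which $i$ is absent; hence the pair $\{X,i\}$ is counted in $S-1$ of the $S$ scenarios, contributing
\[(S-1)\sum_{i=1}^{S}|X-i|\]
to the total distance. For group (ii), a pair $\{i,j\}$ of originals is present in every scenario except the two in which $X$ replaces $i$ or $X$ replaces $j$, so each such pair is counted $S-2$ times. The sum of pairwise distances among $\{1,\ldots,S\}$ is obtained by counting, for each gap value $d\in\{1,\ldots,S-1\}$, the $S-d$ pairs at that gap, which gives $\sum_{d=1}^{S-1}d(S-d)$. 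Multiplying by $S-2$ yields the second term in the numerator.

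Adding the two contributions and dividing by $S\cdot\binom{S}{2}$ produces exactly the claimed formula. The main ``obstacle'' is really just the bookkeeping: being explicit that the expectation is over a uniform choice of displaced element, and being careful that the pair $\{X,i\}$ loses exactly one scenario (when $X\leftrightarrow i$) while an original pair loses exactly two. Once that accounting is stated cleanly, the identity $\sum_{a<b}|a-b|=\sum_{d=1}^{S-1}d(S-d)$ for $a,b\in\{1,\ldots,S\}$ finishes the argument with no further computation.
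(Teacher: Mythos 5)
Your proposal is correct and follows essentially the same argument the paper gives in the paragraph preceding the lemma: the denominator counts the $S$ swap scenarios times the $\binom{S}{2}$ pairs, each pair $\{X,i\}$ survives $S-1$ scenarios, and each original pair survives $S-2$ scenarios, with $\sum_{i=1}^{S-1} i(S-i)$ as the total pairwise distance among the originals. Your version is somewhat more careful in its bookkeeping (explicitly noting translation invariance and verifying the scenario counts), but the decomposition and counting are the same.
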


This idea is extended to the general case of $p$ swaps. Let $X_1, \dots, X_p$ be the $p$ elements that are swapping into the category. 
\\
\begin{lemma} \emph{Expected Intracategory Distance with $p$ swaps}
\begin{align}
& \mathbb{E}\left[|a - b|_{same}\right]  = \nonumber \\ 
&\frac{
\binom{S}{p} \sum\limits_{i = 1}^{p} \sum\limits_{j = i + 1}^{p} |X_i - X_j| + 
\binom{S - 1}{p} \sum\limits_{i = 1}^{S} \sum\limits_{j = 1}^{p} |X_j - i| +
\binom{S - 2}{p} \sum\limits_{i = 1}^{S - 1} i (S - i)
}
{\binom{S}{2} \binom{S}{p}}
\end{align}
\end{lemma}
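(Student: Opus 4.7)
The plan is to compute the desired expectation by partitioning the unordered pairs of items in the post-swap category into three types according to how many of the two items were swapped in. Without loss of generality I assume the category originally occupies the $S$ consecutive ranks $\{1,2,\ldots,S\}$, and I condition on the fixed values $X_1,\ldots,X_p$ of the items that will swap in. The randomness is then only in \emph{which} subset of $p$ original positions is vacated; by the model description each of the $\binom{S}{p}$ such subsets is equally likely. Averaging over configurations and over the $\binom{S}{2}$ unordered pairs of items in the resulting category, the expectation equals the total distance summed over all (configuration, pair) tuples divided by $\binom{S}{2}\binom{S}{p}$. This immediately produces the denominator in the claim; the rest of the proof is a careful count of the numerator.

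Next I would split the pairs by type and count each separately. Type (I) pairs $\{i,j\}$ with $1\le i<j\le S$ consist of two original items; such a pair survives the swap exactly when neither of positions $i$ or $j$ is among the $p$ vacated slots, which happens in $\binom{S-2}{p}$ of the configurations, yielding a total contribution of $\binom{S-2}{p}\sum_{i=1}^{S-1} i(S-i)$ after using the standard identity $\sum_{i<j}(j-i)=\sum_{i=1}^{S-1}i(S-i)$. Type (II) pairs consist of one original item at rank $i\in\{1,\ldots,S\}$ and one swapped-in item $X_k$; such a pair appears whenever $i$ is not vacated, regardless of which vacated slot $X_k$ occupies, giving $\binom{S-1}{p}$ configurations per $(i,k)$ and total contribution $\binom{S-1}{p}\sum_{i=1}^{S}\sum_{k=1}^{p}|X_k - i|$. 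Type (III) pairs consist of two swapped-in items $X_k,X_\ell$; these are always both present in the post-swap category, giving $\binom{S}{p}$ configurations and contribution $\binom{S}{p}\sum_{1\le k<\ell\le p}|X_k - X_\ell|$. Summing the three contributions and dividing by $\binom{S}{2}\binom{S}{p}$ produces the stated formula. As a sanity check, specializing to $p=1$ empties the Type (III) sum and reduces the binomials to $S-1$, $S-2$, and $S$, recovering Lemma 2.2 exactly.

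The main obstacle I expect is a subtle over-counting trap in Type (II): one is tempted to index configurations by both the vacated subset \emph{and} the bijection assigning the $p$ incoming items $X_1,\ldots,X_p$ to the $p$ vacated slots, which would introduce a spurious factor of $p!$ into the count. The key observation I would emphasize up front is that the similarity function, and hence the multiset of pairwise distances inside the category, depends only on the \emph{set} of ranks present, not on which vacated position each $X_k$ occupies; therefore the permutation of incoming items into vacated slots cancels identically between the numerator and denominator and can be omitted from the sample space. Once that point is made explicit, the remainder of the argument reduces to the three elementary binomial counts above and the proof essentially writes itself.
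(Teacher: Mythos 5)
Your proof is correct and follows essentially the same route as the paper: the paper's own ``proof'' is a one-line pointer to extending the $p=1$ reasoning of Lemma II.2, and your three-way partition of pairs (both original, one original and one incoming, both incoming) with the counts $\binom{S-2}{p}$, $\binom{S-1}{p}$, $\binom{S}{p}$ is exactly that extension, including the correct reduction to Lemma II.2 at $p=1$. Your explicit remark that the distances depend only on the set of ranks present (so the assignment of incoming items to vacated slots cancels) is a detail the paper leaves implicit, and it is the right thing to make explicit.
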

\begin{proof}
This lemma follows from the extension of the above ideas discussed in the formation of Lemma II.2
\end{proof}

Clearly these two equations are equivalent if the value $1$ is substituted for $p$. One thing worth noting is that when $p = 0$, $\mathbb{E}\left[|a - b|_{same}\right]  = \frac{S+1}{3}$. 

\subsubsection{Analysis of Similarity Function for Elements in Different Categories}

The analysis of the distance function for elements in different categories is slightly more difficult. As categories can have different distances from each other, a new variable, $D$, is introduced to signify this distance. To give an example of this measurement, if two elements are in adjacently ranked categories, $D = 0$, if they are two away, i.e. category 1 and category 3, then $D = 1$. 
\\
\begin{lemma} \emph{Expected Intercategory Distance with 1 swap}
\begin{align}
\mathbb{E}\left[|a - b|_{diff}\right] &= 
\frac{\left(1 + (S - 1)^2\right) S^3 (D + 1) + 4S \sum\limits_{i = 1}^{S - 1} i (S - i)}{S^4}
\end{align}
\end{lemma}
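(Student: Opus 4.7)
The plan is to partition the $S\times S$ pairs $(a,b)$ with $a$ of true category $A$ and $b$ of true category $B$ into four types according to whether each endpoint is the one item swapped out of its native bin, compute the expected distance of each type separately, and combine by linearity of expectation.

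First I would label items so that $A_1,\ldots,A_{S-1}$ remain in $A$'s native bin $[kS,(k+1)S-1]$ while $A_S$ is the single item swapped into $B$'s native bin $[(k+D+1)S,(k+D+2)S-1]$, and symmetrically for $B_1,\ldots,B_{S-1}$ and $B_S$. This partitions the $S^2$ cross-category pairs into: (i) natural--natural pairs $(A_i,B_j)$ with $i,j\le S-1$, of count $(S-1)^2$; (ii) pairs $(A_i,B_S)$, count $S-1$; (iii) pairs $(A_S,B_j)$, count $S-1$; and (iv) the swapped--swapped pair $(A_S,B_S)$, count $1$.

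For types (i) and (iv) both endpoints lie in bins separated by $D+1$ blocks of length $S$, so writing the distance as $(D+1)S+Y-X$ with $X,Y$ offsets uniform on $\{0,\ldots,S-1\}$ within their respective bins, the symmetric terms $E[Y]-E[X]$ vanish and each such pair contributes expected distance $(D+1)S$. These two types together contribute $\bigl[(S-1)^2+1\bigr](D+1)S$ to the sum of expected distances. For types (ii) and (iii) both endpoints live in the same bin, so the distance reduces to the gap between two distinct uniform positions in $\{0,\ldots,S-1\}$; by the same argument underlying Lemma~II.1 this expectation equals $\tfrac{S+1}{3}$, and summing over the $S-1$ pairs of each type gives $\tfrac{(S-1)(S+1)}{3}=\tfrac{2}{S}\sum_{i=1}^{S-1}i(S-i)$. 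Adding the two mixed types yields $\tfrac{4}{S}\sum_{i=1}^{S-1}i(S-i)$.

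Combining everything and dividing by the $S^2$ pairs gives
\[
\mathbb{E}[|a-b|_{\text{diff}}] \;=\; \frac{\bigl[(S-1)^2+1\bigr](D+1)S + \tfrac{4}{S}\sum_{i=1}^{S-1}i(S-i)}{S^{2}},
\]
and multiplying numerator and denominator by $S^2$ recovers the stated form. The only subtle point is that within a bin the $S$ positions are not independent but form a uniform random permutation of the bin's integers; however, since only first moments of individual pair distances are needed, linearity of expectation side-steps this dependence entirely. I would sanity-check with the case $S=2$, $D=0$, where the formula predicts $3/2$ and direct enumeration of the post-swap configurations confirms the value.
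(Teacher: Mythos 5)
Your proof is correct and follows essentially the same route as the paper's Appendix B derivation (which proves the general $p$-swap formula and obtains the $1$-swap case by substituting $p=1$): both decompose the $S^2$ cross-category pairs into those landing in bins $(D+1)S$ apart, contributing $(D+1)S$ each in expectation, and those landing in the same bin, contributing $\tfrac{S+1}{3}=\tfrac{2}{S(S-1)}\sum_{i=1}^{S-1}i(S-i)$ each, with the counting factors $(S-1)^2+1$ and $4(S-1)$ matching the paper's term for term. Your phrasing via per-pair conditional expectations (rather than the paper's summation of total distances over all arrangements) is a cleaner presentation of the same computation, and the $S=2$, $D=0$ sanity check is a nice touch.
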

\begin{proof}
Please See Appendix B
\end{proof}

We can also extend this formulation to the situation where there are $p$ elements swapping. As all the elements swapping in come from the other category, there is no need to introduce the $X_i$ variables seen in the intracategory comparison.
\\
\begin{lemma} \emph{Expected Intercategory Distance with $p$ swaps}
\begin{align}
& \mathbb{E}\left[|a - b|_{diff}\right] \nonumber \\ 
& = 
\frac{\left(\binom{S - 1}{p}^2 + \binom{S - 1}{p - 1}^2 \right) S^3 (D + 1) + 4 \binom{S - 2}{p - 1} \binom{S}{p} \sum\limits_{i = 1}^{S - 1} i (S - i)}{S^2 * \binom{S}{p}^2}
\end{align}
\end{lemma}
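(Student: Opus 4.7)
The plan is to mirror the derivation of Lemma II.3 and push it through to arbitrary swap parameter $p$ by conditioning on which positions get swapped. Without loss of generality place category $c_1$ over ranks $\{1,\dots,S\}$ and category $c_2$ over ranks $\{(D{+}1)S+1,\dots,(D{+}2)S\}$. A swap pattern is a pair $(P_1,P_2)$ of $p$-subsets of $[S]$, each drawn uniformly from $\binom{[S]}{p}$ and independently; $P_1$ marks the positions in $c_1$ whose items move to $c_2$'s range, and $P_2$ plays the symmetric role. After the swap, the multiset of ranks held by members of $c_1$ is $A\cup A'$ where $A=[S]\setminus P_1$ and $A'=(D{+}1)S+P_2$, and the multiset of ranks held by members of $c_2$ is $B\cup B'$ where $B=P_1$ and $B'=(D{+}1)S+([S]\setminus P_2)$. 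Since the sum $\sum_{a\in c_1,b\in c_2}|r(a)-r(b)|$ depends only on these multisets, the specific matching inside the swap does not enter the calculation.

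Next, decompose the double sum according to which block each rank lies in:
\begin{equation*}
\sum_{a,b}|r(a)-r(b)|=\sum_{x\in A,y\in B'}|x-y|+\sum_{x\in A',y\in B}|x-y|+\sum_{x\in A,y\in B}|x-y|+\sum_{x\in A',y\in B'}|x-y|.
\end{equation*}
The first two are ``cross-range'' (one rank small, one large), and contribute the $(D{+}1)$ term; each signed absolute value collapses to a linear expression because one rank is guaranteed larger than the other. The last two are ``same-range'' pieces, each a pure intracategory sum over a subset of $[S]$, and will produce the $\sum_{i=1}^{S-1}i(S-i)$ term.

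Now take expectations. For the cross-range pieces, the leading constant is $|A|\,|B'|(D{+}1)S=(S-p)^2(D{+}1)S$ for the first and $|A'|\,|B|(D{+}1)S=p^2(D{+}1)S$ for the second; the remaining linear pieces involve $\mathbb{E}\sum_{x\in A}x$ and $\mathbb{E}\sum_{x\in B'}x$, which by symmetry cancel in pairs. Multiplying by $\binom{S}{p}^2$ and using $\binom{S-1}{p}/\binom{S}{p}=(S-p)/S$ and $\binom{S-1}{p-1}/\binom{S}{p}=p/S$ rewrites the cross-range total as $\bigl(\binom{S-1}{p}^2+\binom{S-1}{p-1}^2\bigr)S^3(D{+}1)$, matching the first bracket in the claimed formula. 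For each same-range piece, a pair of distinct positions $\{i,j\}\subset[S]$ with $j-i=d$ contributes $d$ precisely when exactly one of $i,j$ belongs to $P_1$ (resp.\ $P_2$); the probability of this event is $2\binom{S-2}{p-1}/\binom{S}{p}$, and summing over $d$ using $\sum_{d}d(S-d)=\sum_{i=1}^{S-1}i(S-i)$ yields $2\binom{S-2}{p-1}\binom{S}{p}\sum_{i=1}^{S-1}i(S-i)$ per range; the two ranges combine to give the factor of $4$ in the second bracket. Dividing by the total configuration count $S^2\binom{S}{p}^2$ produces the stated expression.

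The main obstacle is bookkeeping rather than any genuinely difficult probabilistic argument: one must verify that the ``linear in rank'' portions of the cross-range sums truly vanish under expectation (this is where the independence of $P_1$ and $P_2$ and the uniform marginal of each is essential), and that the split-pair probability $2\binom{S-2}{p-1}/\binom{S}{p}=2p(S-p)/(S(S-1))$ is applied correctly in both same-range sums. Once these two identities are in hand, the formula collapses to the Lemma II.3 statement at $p=1$ (where $\binom{S-1}{1}^2+\binom{S-1}{0}^2=(S{-}1)^2+1$ and $\binom{S-2}{0}\binom{S}{1}=S$), providing a useful sanity check for the general expression.
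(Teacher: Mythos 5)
Your argument is correct and follows the same decomposition as the paper's Appendix B: the cross-range pairs (elements landing in different categories' rank blocks) yield the $\bigl(\binom{S-1}{p}^2+\binom{S-1}{p-1}^2\bigr)S^3(D+1)$ term and the same-range pairs yield the $4\binom{S-2}{p-1}\binom{S}{p}\sum_{i=1}^{S-1}i(S-i)$ term, with identical combinatorial coefficients. Your version is in fact more explicit than the paper's term-by-term counting sketch, since you verify the cancellation of the linear-in-rank contributions and the split-pair probability directly.
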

\begin{proof}
Please See Appendix B
\end{proof}

For the majority of our comparisons, the case where $C = 2$ is considered,  which causes $D$ to always be zero as a greater distance is impossible. By fixing $D = 0$, the terms involving $D$ can be disregarded as they become the multiplicative identity.

\subsubsection{Comparisons}

In order to achieve a graph in which the community structure is discernible, we would like to see that \[\beta < \epsilon < \alpha \] so that edges within a category are more likely to be added than edges joining elements between different categories. Recalling that as a proxy we can instead use the relationship between their expected distances (which is the inverse of the relationship between $\alpha$ and $\beta$), we require $\mathbb{E}\left[|a - b|_{same}\right] < \mathbb{E}[|a - b|] < \mathbb{E}\left[|a - b|_{diff}\right]$. As a base case, we compare the expected distance values without any mixing: 
\[\frac{S+1}{3} \overset{?}{<} \frac{N+1}{3} \overset{?}{<}  S(D+1)\]
Remembering that we are primarily considering the case when $D = 0$ allows us to equate the right side to simply $S$. Similarly, the middle term is equivalent to $\frac{2S + 1}{3}$. Clearly, then, this inequality simplifies to the true inequality $\frac{S+1}{3} < \frac{2S+1}{3} <  S$. 

It is now important to consider the cases where $p$ is not zero. In Figure \ref{fig:exp_dist} the results seen in Lemmas II.1 - II.5 are combined to see the values of the expected distance functions as a function of $p$, the number of elements swapped.

\begin{figure}[h!]
\caption{Expected Distances Between Items of a Graph with 2 Categories of Size 20}
\vspace{0.4cm}
\includegraphics[width=0.5\textwidth]{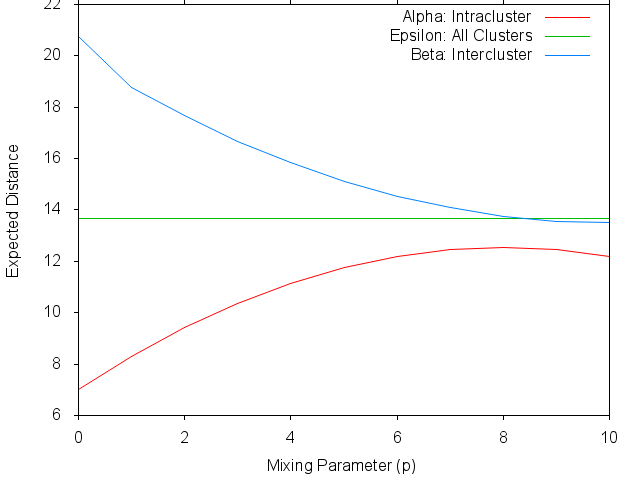}
\centering
\label{fig:exp_dist}
\end{figure}

As $p$ increases, the expected distance between elements of different categories decreases and the expected distance between elements of the same category increases. It is only once $p$ gets large enough ($p > 8$) that the expected distance between elements of different categories dips beneath the threshold. Regardless of the value of $p$, the expected intracategory distance is lower than the intercategory distance. In the next section, we introduce the Weighted Label Propagation algorithm which is used to examine the community structure on the created graph.

\section{Weighted Label Propagation Algorithm}

In this section we introduce a modification of the traditional label propagation algorithm seen in \cite{2007_Label_Prop}. As with the original, this algorithm starts with each node having its own unique label. At each iteration, a node's label is updated to be the most common label of all its neighboring nodes, with ties broken uniformly and randomly. As iterations continue, most of the labels disappear as many nodes take on the same label. The convergence point in this algorithm is when the label of every node does not change from iteration to iteration. This is equivalent to every node having the most common label of all of its neighbors. At the conclusion, all nodes sharing the same label are grouped into communities.

In the original version, all neighboring labels have equal weight regardless of their location in the graph. The distance of a label from its source vertex is not considered at all. In the weighted version of the algorithm, we incorporate this distance function in order to allow labels that occur close to their source label to have more weight than labels that are very far from their source. This modification serves to better localize the clustering. A secondary benefit of this modification is that it prevents the entire graph from being classified as one community purely due to an increase in the frequency of a label. 

As stated, this algorithm is very similar to the non-weighted version except that before assigning the label with the largest count, the label counts are re-weighted based on their distance from their source vertex. This weighting should be a function of the distance, $d$, and can be done in a variety of ways. In this paper, two such distance functions chosen were the linear $W_1(d) = \frac{1}{d}$ and the exponential $W_2(d) = \frac{1}{2^{d}}$.

\begin{figure*}
\centering
\begin{minipage}{.5\textwidth}
  \centering
  \includegraphics[width=.7\linewidth]{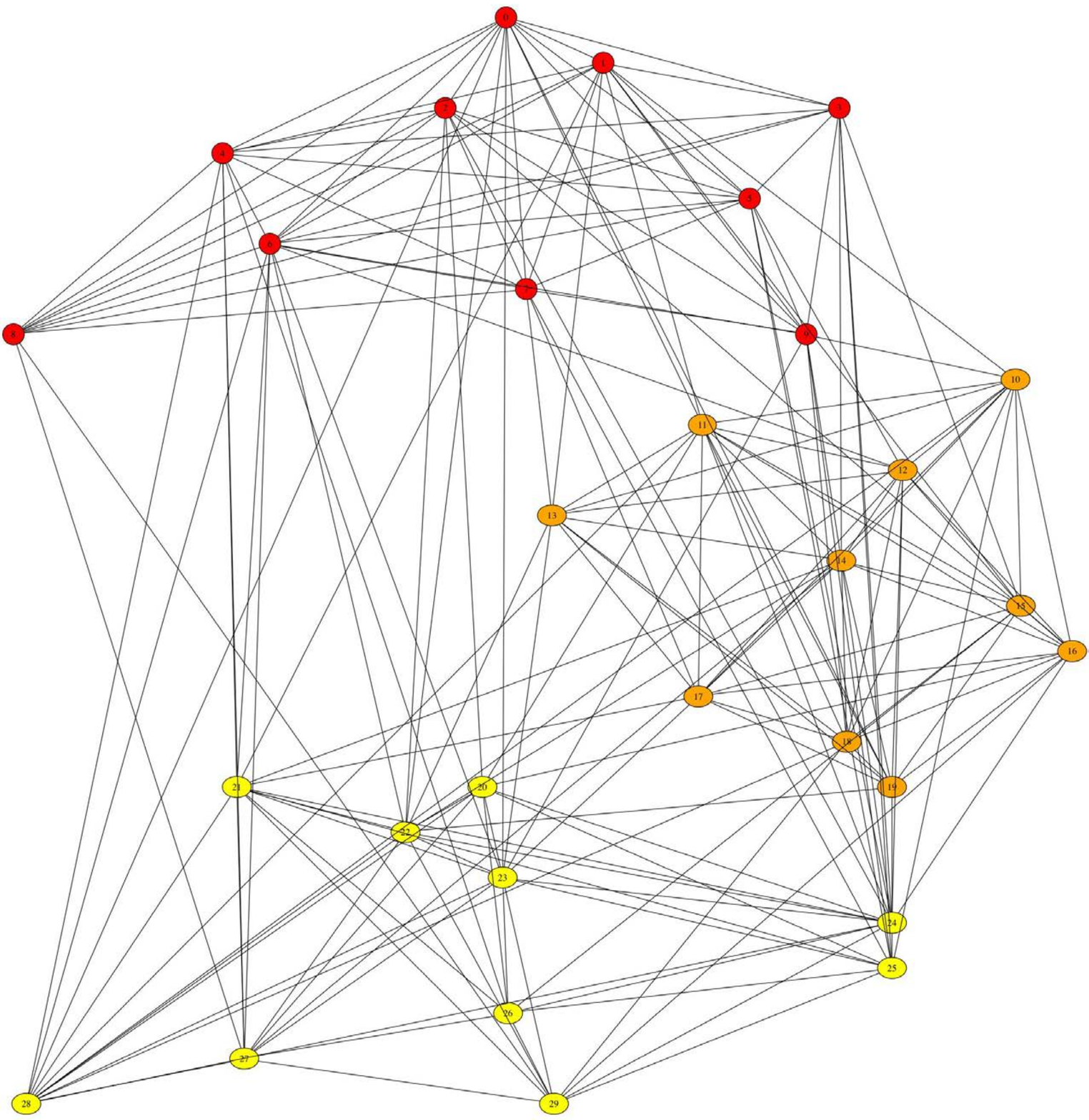}
  \caption{Weighted Label Propagation Results}
  \label{fig:true}
\end{minipage}%
\begin{minipage}{.5\textwidth}
  \centering
  \includegraphics[width=.7\linewidth]{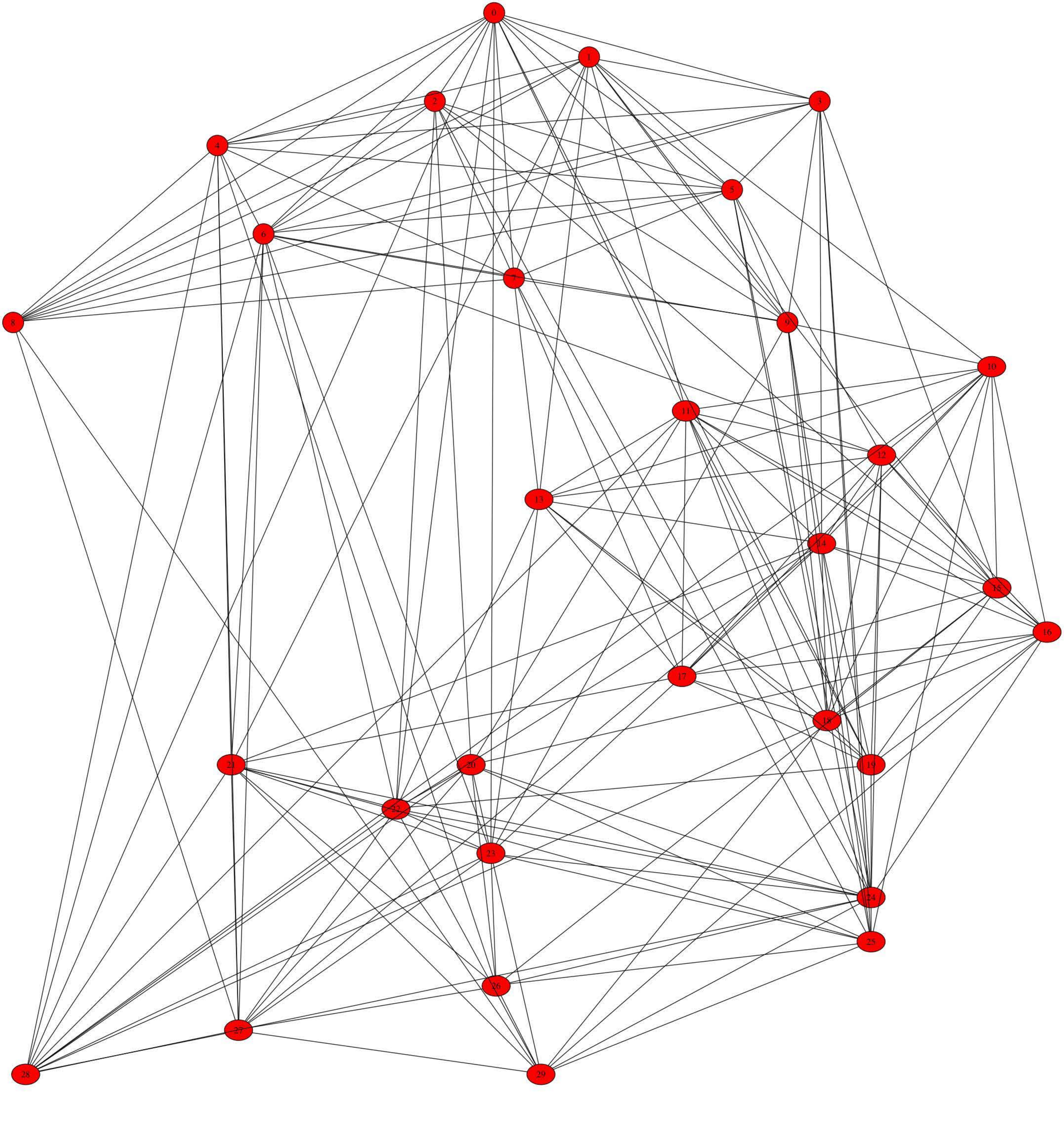}
  \caption{Label Propagation Results}
  \label{fig:guess}
\end{minipage}
\caption{Here we can see the results of the algorithm on the categorization of 30 elements split into 3 categories. The left graph shows the true categorization while the right graph shoes the categorization resulting from the proposed algorithm. As can be seen, the results are nearly identical except for one orange node that is categorized as yellow.}
\label{fig:compare_true_guess}
\end{figure*}

\begin{algorithm}[H]
  \caption{Weighted Label Propagation}
  \begin{algorithmic}
   \label{alg:WeightedLabelProp}
  \STATE \textbf{Input: } A graph, $\G{G}$. A weighting function, $W : \mathbb{R} \mapsto [0,1]$. 
  \STATE \textbf{Initialize: } $t = 0$, $\forall v \in \G{G}$, $L_v(t) = v$. Create a random ordering of the nodes, $X$. 
  \WHILE{labels have not converged}
	\FOR{$v \in X$}
		\STATE Initialize a map, $C$, of all zero values to represent the counts of each label
		\FOR{$z \in N_{v}$}
			\STATE Let $L = L_z(t)$
			\STATE $C[L] = C[L] + W(dist(L, z))$
		\ENDFOR
		\STATE $L_v(t + 1) = \underset{z \in N_{v}}{\operatorname{argmax}} \:\: C[L_z(t)]$ 
	\ENDFOR
	\STATE $t = t+1$
 \ENDWHILE
 \STATE Split the vertices into communities based on common labels
  \end{algorithmic}
\end{algorithm}

The weighted algorithm can be performed in either a synchronous or asynchronous manner. The synchronous manner is the one described above and the asynchronous version differs only in the update step. Rather than always updating based on the label from the previous time step, the asynchronous version uses the new label of any nodes that have already been updated at the current time step. 

The time complexity of this algorithm is also linear per iteration. The only difference is that it requires a preprocessing step to find the distances from all vertices to all other vertices in the graph. This is done using Dijkstra's algorithm for each vertex $O(V)*O(E \log{V}) = O(VE \log{V})$. However, since the graphs we are considering are relatively sparse, we have $E = O(V)$, so this reduces to $O(V^2 \log{V})$. We have omitted the analysis of the convergence of the weighted label propagation algorithm and intend to explore it in future work. Please see Appendix C for a comparison of the weighted label propagation algorithm with other algorithms.

\section{Experimental Results}

\begin{figure}[ht!]
\vspace{0.4cm}
\includegraphics[scale=0.4]{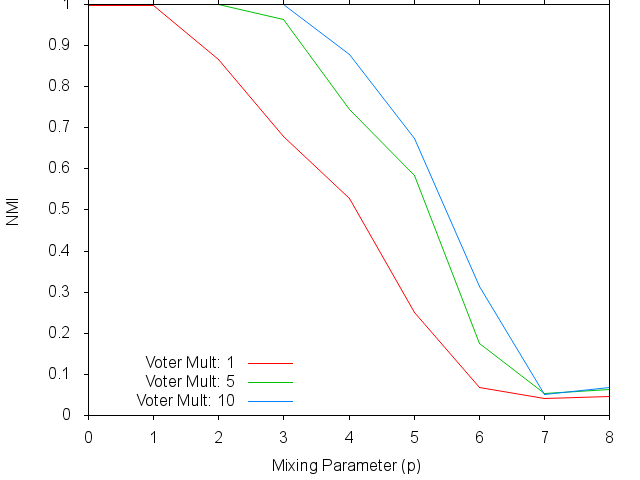}
\centering
\caption{NMI Accuracies vs Mixing Parameter}
\label{fig:nmi}
\end{figure}

\subsection{Synthetic Data}
\begin{figure*}[ht!]
\centering
\begin{minipage}{.5\textwidth}
  \centering
  \includegraphics[width=.7\linewidth]{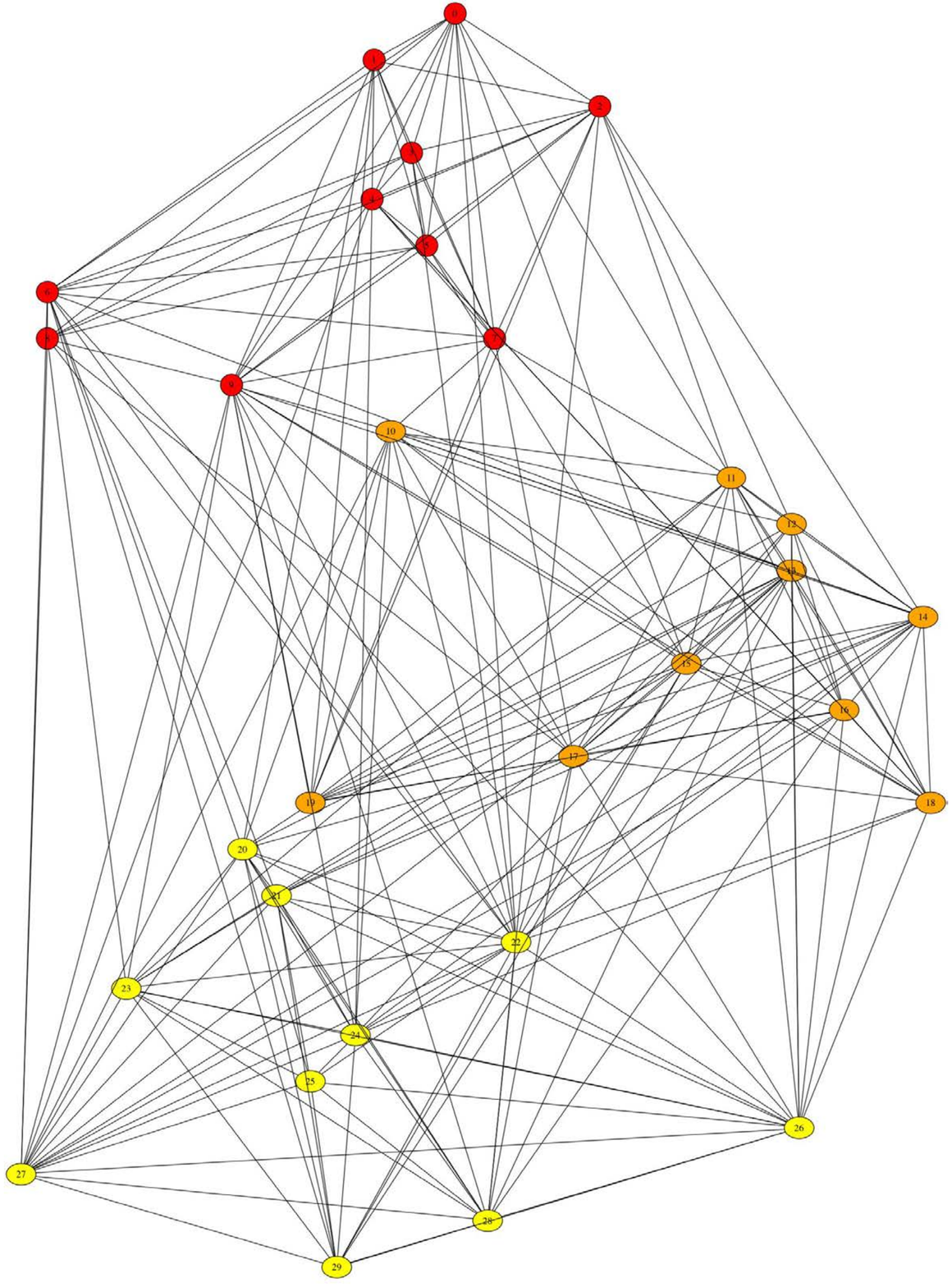}
  \caption{True Categorization}
  \label{fig:true}
\end{minipage}%
\begin{minipage}{.5\textwidth}
  \centering
  \includegraphics[width=.7\linewidth]{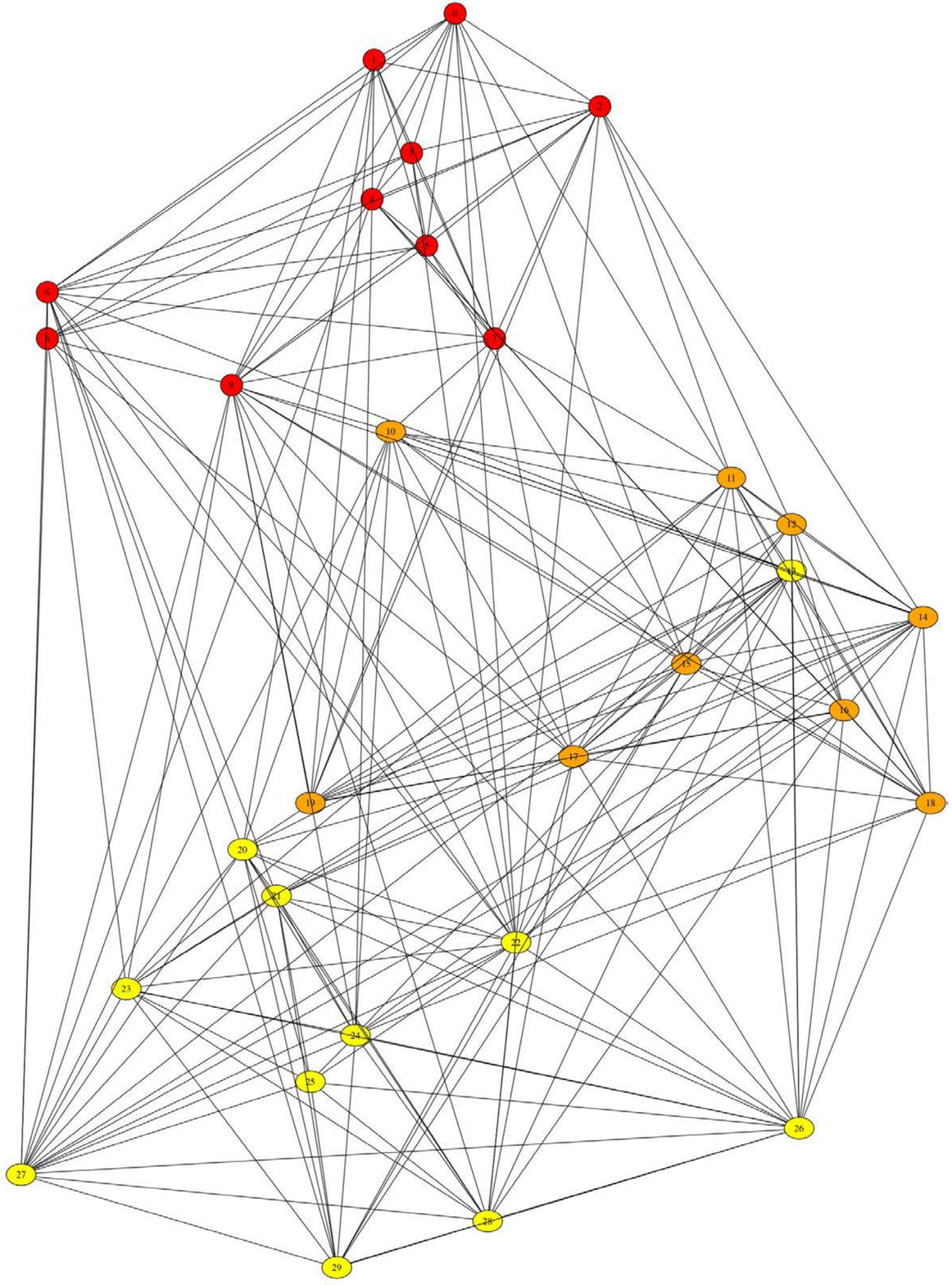}
  \caption{Categorization Resulting from Algorithm}
  \label{fig:guess}
\end{minipage}
\caption{Here we can see the results of the algorithm on the categorization of 30 elements split into 3 categories. The left graph shows the true categorization while the right graph shoes the categorization resulting from the proposed algorithm. As can be seen, the results are nearly identical except for one orange node that is categorized as yellow.}
\label{fig:compare_true_guess}
\end{figure*}

In order to judge the success of the algorithm with the synthetic data, we compared the resulting community labeling to the initial categorization from the beginning of the algorithm. The metric we used to judge this comparison was normalized mutual information (NMI) \cite{NMI}. This metric ranges from $0$ to $1$, where a $1$ signifies that we have a perfect match with the true community structure and a $0$ signifies no relationship with the true structure.

\begin{definition} \emph{Normalized Mutual Information} \\
\[{\tt NMI}(A, B) = \frac{2 I(A,B)}{H(A) + H(B)}\]
\emph{where $A$ and $B$ are two partitionings of elements, $I$ denotes the mutual information between the partitions and $H$ denotes the entropy of a partition.}
\end{definition}

We can examine Figure \ref{fig:nmi} to see the effects of the voting multiplier and the mixing parameter on the accuracy of the model. Here the voting multiplier signifies the ratio of the number of voters to the total number of elements. As we would expect, the larger the voting multiplier, the higher the accuracy, regardless of the mixing parameter. An increase in the mixing parameter does show a general trend of decreasing the NMI with the true categorization.

\subsection{Real Data}

In order to test the algorithm, we used the MovieLens data \cite{MovieLens}, which consists of user ratings on a variety of movies. Users rate the movies from 0-5 and may leave some of the movies unranked. We explored different thresholds for cutting off the similarity function and present the results below. In every situation, the entire dataset chosen consists of the union of all the categories.

\begin{itemize}
\item \textbf{Comparing Amityville Horror Movies and Kid's Movies \\
$\mathbf{\epsilon} = 0.94 \rightarrow \mbox{Edge Ratio} = 0.440$} 
\begin{enumerate}
\item \emph{Category 1:} Toy Story (1995), Lion King, The (1994), Aladdin (1992), Snow White and the Seven Dwarfs (1937), Alice in Wonderland (1951)
\item \emph {Category 2:} Aladdin and the King of Thieves (1996), Jungle Book, The (1994), Pocahontas (1995)
\item \emph{Category 3:} Amityville 1992: It's About Time (1992), Amityville 3-D (1983), Amityville: A New Generation (1993), Amityville II: The Possession (1982), Amityville Horror, The (1979), Amityville Curse, The (1990)
\end{enumerate}

\item \textbf{Comparing Amityville Horror Movies and Star Trek Movies \\
$\mathbf{\epsilon} = 0.94 \rightarrow \mbox{Edge Ratio} = 0.359$} 
\begin{enumerate}
\item \emph{Category 1:} Star Trek VI: The Undiscovered Country (1991), Star Trek: The Wrath of Khan (1982), Star Trek III: The Search for Spock (1984), Star Trek IV: The Voyage Home (1986), Star Trek: Generations (1994), Star Trek: The Motion Picture (1979)
\item \emph {Category 2:} Star Trek V: The Final Frontier (1989), Amityville 1992: It's About Time (1992), Amityville 3-D (1983), Amityville: A New Generation (1993), Amityville II: The Possession (1982), Amityville Horror, The (1979), Amityville Curse, The (1990)
\end{enumerate}

\item \textbf{Comparing Star Wars and Star Trek Movies \\
$\mathbf{\epsilon} = 0.92 \rightarrow \mbox{Edge Ratio} = 0.266$} 
\begin{enumerate}
\item \emph{Category 1:} Star Wars (1977), Empire Strikes Back, The (1980), Return of the Jedi (1983)
\item \emph {Category 2:} Star Trek VI: The Undiscovered Country (1991), Star Trek: The Wrath of Khan (1982), Star Trek III: The Search for Spock (1984), Star Trek IV: The Voyage Home (1986), Star Trek: Generations (1994), Star Trek: The Motion Picture (1979)
\item \emph {Category 3:} Star Trek V: The Final Frontier (1989)
\end{enumerate}

\textbf{$\mathbf{\epsilon} = 0.915 \rightarrow \mbox{Edge Ratio} = 0.333$} 
\begin{enumerate}
\item \emph{Category 1}: Star Wars (1977), Empire Strikes Back, The (1980), Return of the Jedi (1983), Star Trek: The Wrath of Khan (1982) 
\item \emph{Category 2}: Star Trek VI: The Undiscovered Country (1991), Star Trek III: The Search for Spock (1984), Star Trek IV: The Voyage Home (1986), Star Trek: Generations (1994), Star Trek: The Motion Picture (1979), Star Trek V: The Final Frontier (1989)
\end{enumerate}
\end{itemize}

We are able to form a categorization of the movies chosen by examining the relationship between the ratings of all of the voters. In each of the above situations, a clear distinction can be seen between the groups of movies we are examining. Although at some points there is an additional group introduced, the groups reflect the separation that is implied based on the chosen films. 

\section{Appendix}

\subsection{Calculation of Expected Value of Similarity Function:}

We are calculating the expected value of the similarity function \[Sim(a, b, N) = 1 - \frac{|a - b|}{N}\]
\begin{align}
\mathbb{E}[S(a,b,N)] &= \mathbb{E}\left[1 - \frac{|a - b|}{N}\right] \nonumber \\
& = \mathbb{E}\left[1\right] - \mathbb{E}\left[\frac{|a - b|}{N}\right] 
=  1 - \frac{1}{N} \mathbb{E}\left[|a - b|\right] \\ 
\mathbb{E}\left[|a - b|\right] &= \frac{1}{\binom{N}{2}} \sum\limits_{i = 1}^{N-1} i(N-i) \\
& = \frac{2}{N(N-1)}\frac{(N-1)N(N+1)}{6} 
= \frac{N+1}{3} \\
\mathbb{E}[S(a,b,N)] &= 1 - \frac{1}{N}\frac{(N+1)}{3} = 1 - \frac{N+1}{3N}
\end{align}

\subsection{Calculation of Expected Intercategory Distance}

Below we present the formulation of the expected intercategory distance with $p$ swaps. The proof of the equation with $1$ swap can be seen by simply substituting $p = 1$. Similarly to the derivation of the intracategory distance, the intercategory distance was derived by examining the possible combinations of the elements that are going to be compared. 

\begin{align}
&\mathbb{E}\left[|a - b|_{diff}\right] = \nonumber \\
& \frac{\left(\binom{S - 1}{p}^2 + \binom{S - 1}{p - 1}^2 \right) S^3 (D + 1) + 4 \binom{S - 2}{p - 1} \binom{S}{p} \sum\limits_{i = 1}^{S - 1} i (S - i)}{S^2 * \binom{S}{p}^2}
\end{align}

$\left(\binom{S - 1}{p}^2 + \binom{S - 1}{p - 1}^2 \right) S^3 (D + 1)$: This represents the total sum of distances when the two elements are in separate categories. $S^3 (D + 1)$ is the cumulative distance for all possible pairs of elements in each arrangement of this type and $\left(\binom{S - 1}{p}^2 + \binom{S - 1}{p - 1}^2 \right)$ is the total number of times we will have comparisons between elements of different categories. The first term represents arrangements in which neither of the elements swaps from its own category. We can think of this as fixing the element we are considering and picking the $p$ elements to swap from the other $S - 1$ elements of the category for both of the categories. The second term represents arrangements when the elements actually swap with each other and neither is in its true category. We can calculate the number of ways of doing this as requiring the element to swap and picking the other $p - 1$ elements from the $S - 1$ remaining elements of the category for each category. 

$4 \binom{S - 2}{p - 1} \binom{S}{p} \sum\limits_{i = 1}^{S - 1} i (S - i)$: This term represents the total sum of distances when the two elements are in the same category. We know that $\sum\limits_{i = 1}^{S - 1} i (S - i)$ is the total distance for each occurrence of this comparison (as seen in Appendix A). Here $4 \binom{S - 2}{p - 1} \binom{S}{p}$ represents the arrangement of categories in which this comparison arises. We know that we are then fixing two elements from the category (1 to swap and 1 to stay), so we then pick the other $p - 1$ elements to swap from the remaining $S - 2$ elements. Now we know we will have an occurrence of this type regardless of what happens in the other category, leading to the factor of $\binom{S}{p}$ (the number of possible outcomes from the second category). We then multiply by 2 twice: once because either element can be the one that stays and once because we must consider this summation of distances for both of the categories.

Finally, we see the denominator is formed from the product of the number of possible swaps at each arrangement. We know that there are $\binom{S}{p}^2$ possible arrangements resulting from each cluster picking $p$ elements to swap. Similarly, we know that each arrangement has $S^2$ swaps just by simple examination.

\subsection{Comparison of Weighted Label Propagation with Other Algorithms:}

In order to judge the success of the Weighted Label Propagation algorithm, we compared its success in determining the community structure of a series of planted partition models:

\begin{table}[h!]
$\textbf{p} = 0.7, \: \textbf{q} = 0.01, \: \mbox{\textbf{number categories}} = 10, \: \mbox{\textbf{size}} = 5$\\

\begin{tabular}{| l | c | c | c |}
\hline
& CNM & Regular & Weighted \\ \hline
Avg Num of Categories: & 8.81 & 9.35 & 9.99 \\ 
Avg NMI with Truth: & 0.9568 & 0.9764 & 0.9940 \\ 
Avg Modularity: & 0.6998 & 0.6992 & 0.7020 \\ \hline
\end{tabular}
\end{table}

\begin{table}[h!]
$\textbf{p} = 0.75, \: \textbf{q} = 0.01, \: \mbox{\textbf{number categories}} = 10, \: \mbox{\textbf{size}} = 5$\\

\begin{tabular}{| l | c | c | c |}
\hline
& CNM & Regular & Weighted \\ \hline
Avg Num of Categories: & 8.97 & 9.29 & 10.02 \\ 
Avg NMI with Truth: & 0.9644 & 0.9754& 0.9970 \\ 
Avg Modularity: & 0.7003 & 0.6969 & 0.7024 \\ \hline
\end{tabular}
\end{table}

\begin{table}[h!]
$\textbf{p} = 0.8, \: \textbf{q} = 0.01, \: \mbox{\textbf{number categories}} = 10, \: \mbox{\textbf{size}} = 5$\\

\begin{tabular} {| l | c | c | c |}
\hline
& CNM & Regular & Weighted \\ \hline
Avg Num of Categories: & 8.99 & 9.61 & 10.01 \\ 
Avg NMI with Truth: & 0.9646 & 0.9873 & 0.9976 \\ 
Avg Modularity: & 0.7066 & 0.7069 & 0.7089 \\ \hline
\end{tabular}
\end{table}

As is seen in all of the above examples, the weighted label propagation algorithm performs better than the CNM algorithm and the regular label propagation algorithm in terms of the correct number of categories found, the normalized mutual information (NMI) and the modularity of the resulting partition.

\section{Acknowledgements}
This research was supported by NSF Research Experiences for Undergraduate (REU) program via the grant NSF:CCF:1319653. 

\bibliographystyle{IEEEtran}
\bibliography{community_detection}

%

\end{document}